\newtheorem{theorem}{Theorem}[section]
\newtheorem{lemma}[theorem]{Lemma}
\newtheorem{corollary}[theorem]{Corollary}
\begin{document}

\title{Learning Network Structures from Contagion}
\author{
	Adisak Supeesun and Jittat Fakcharoenphol \\
	\footnotesize Department of Computer Engineering, Kasetsart University, Bangkok, Thailand, 10900 \\
	\footnotesize \texttt{\{adisak.sup, jittat\}@gmail.com}
	\footnotesize \date{}
}
\maketitle

\begin{abstract}
In 2014, Amin, Heidari, and Kearns proved that tree networks 
can be learned by observing only the infected set of vertices of the
contagion process under the independent cascade model, in both the
active and passive query models.  They also showed empirically that
simple extensions of their algorithms work on sparse networks.  In this
work, we focus on the active model.  We prove that a simple
modification of Amin {\em et al.}'s algorithm works on more general
classes of networks, namely (i) networks with large girth and low path
growth rate, and (ii) networks with bounded degree.  This also
provides partial theoretical explanation for Amin {\em et al.}'s
experiments on sparse networks.
\end{abstract}

\small {\bf Keywords:} Graph Algorithms, Learning, Contagion, Network Structures, Large Girth

\section{Introduction}
\label{section:intro}
Edge information of a network is essential to many network analysis
applications, e.g., in social network analysis~\cite{G1973, TSWZ2009,
  ST2011}, in epidemiology~\cite{CPBCFBT2005}, and in viral
marketing~\cite{LAH2006}.  However, in some type of networks, e.g.,
online social networks or the network of customers of a rival company,
it is not easy to obtain this information.  Therefore, in these
settings, it is more practical to infer the network structures from
observable data.

This paper considers a problem of inferring the network structure from
a contagion process, known as the {\em independent cascade model}
(defined by Goldenberg {\em et al.}~\cite{GLM2001a, GLM2001b} and
Kempe {\em et al.}~\cite{KKT2003}).  The problem was first considered
by Gomez-Rodriguez, Leskovec, and Krause~\cite{GLK2010}.  While most
results utilize the orders of infections to infer the network
structures (e.g.,~\cite{GLK2010, ML2010, GS2012, NS2012, ACKP2013}),
the set of infected vertices are clearly easier to observe.  Thus, we
follow a recent work by Amin, Heidari, and Kearns~\cite{AHK2014} who
introduced a problem of learning unknown network structure by
observing only sets of initial infected vertices and sets of final
infected vertices.  Amin {\em et al.} considered both the {\em active
  model} where the algorithm can make active seed queries and the {\em
  passive model} where the algorithm only observes the seed and the
set of infected vertices, and proposed algorithms for exactly learning
tree networks under these two models. They also proposed another
algorithm for learning general networks, called the $K$-lifts
algorithm, that works well empirically under real networks and random
networks.  Amin {\em et al.} proved that the $K$-lifts algorithm can
learn cycle networks and provided an example which is a union of a
star and a large cycle that the algorithm fails to learn.
 
In this paper, we consider the active model and extend the approach of
Amin {\em et al.}~\cite{AHK2014}, to two other classes of networks, as
described below.

{\bf Networks with large girth and low path growth rate.}
Essentially, these are networks that behave almost like trees.  We
consider two parameters of the networks: (1) the girth, which is the
size of the smallest cycle in the network and (2) the growth rate on
the number of paths between two vertices over the length of the paths.
We show that if the girth of a network is large enough and the growth
rate is small enough, the network can be exactly learned by an
algorithm that uses only a polynomial number of queries in term of the
number of vertices and $\Delta$, the contagion parameter (to be
defined in the next section).  Since a tree does not contain any
cycle, its girth is $\infty$; this class of networks generalizes the
tree networks considered in~\cite{AHK2014}.  This class also includes
a counter example for the $K$-lifts algorithm provided
in~\cite{AHK2014}.  While the focus of this work is extending the
theoretical limitation of Amin {\em et al.}~\cite{AHK2014}'s work,
this type of networks may appear in advisor-advisee networks where
cross-field advising happens on rare occasions or in organizational
networks, networks that represent ranks and relations of people in
organizations, where few low-ranking workers may report to more than
one middle managers, resulting in large cycles that span the people at
the top-most levels of the organizations.

{\bf Bounded degree networks.} We also show that if the maximum degree
of vertices in the network is bounded above by a constant, the network
can be exactly learned with polynomial queries as well.

The paper is organized as follows.  In the next section, we state
problem definitions and discuss previous results.
Section~\ref{section:large_girth} shows that a large girth network
with low path growth can be learned in the contagion model.  In
section \ref{section:bounded_degree}, we show that if the maximum
degree of a network is bounded by a constant, the network can also be
learned.
%%%%%%%%%%%%%%%%%%%%%%%%%%%%%%%%%%%%%%%%%%%%%%%%%%%%%%%%%%%
%%%%%%%%%%%%%%%%%%%%%%%%%%%%%%%%%%%%%%%%%%%%%%%%%%%%%%%%%%%

\section{Definitions and Results}
\label{section:problem}

We formally describe the contagion process.  Let $G=(V,E)$ be an
undirected network whose edges are unknown with $n$ vertices.  Let
$S\subseteq V$ be the {\em seed set}, the set of initially infected
vertices.  From $S$, a contagion proceeds in discrete steps under the
independent cascade model defined by Goldenberg {\em et
  al.}~\cite{GLM2001a, GLM2001b} and Kempe {\em et
  al.}~\cite{KKT2003}, as follows.  We assume that every vertex in $S$
becomes infected at step $t=0$.  At each step $t=1,2,3,\ldots$, every
vertex $u \in V$ which became infected at step $t-1$ tosses a coin to
spread the disease to each of its uninfected adjacent vertices $v \in
V$ with the infectious probability $p_{uv}$.  If $v$ receives the
disease from $u$, we say that \textit{$v$ becomes infected at step
  $t$}.  In this case, we say that the edge $(u,v)$ is
\textit{active}, otherwise $(u,v)$ is \textit{inactive}.  Note that
when referring to an edge as active or inactive, the order of its end
points in the tuple is important (e.g., when $(u,v)$ is active,
$(v,u)$ is inactive).  The contagion proceeds until there are no newly
infected vertices.  Note that spreading of disease through edge
$(u,v)$ occurs only once at the first step when $u$ or $v$ become
infected.  The minimum and maximum infectious probabilities are
denoted by $\alpha$ and $\beta$, respectively.  We define the
contagion parameter $\Delta = \min\{\alpha, 1-\beta\}$.

The problem of learning network structure from contagion is defined as
follows.  For a network $G=(V,E)$, we are given the set of vertices
$V$ and the contagion parameter $\Delta$, but for all edges $(u,v) \in
E$, $(u,v)$ and $p_{uv}$ are unknown.  We will describe the version
for the active model here and refer to the Previous Results section 
for the description of the passive model.  Under the active query model, 
for each round $i=1,2,\ldots,M$, the algorithm can perform a query 
by choosing a seed set $S_i \subseteq V$.  The contagion process 
described above then starts from $S_i$ and returns the set of 
infected vertices $A_i$.  The goal of the problem is to find an algorithm 
that uses a small number of rounds $M$, and correctly returns the edge set $E$.

{\bf Previous Results.}  Amin, Heidari, and Kearns~\cite{AHK2014}
considered the problem in both active model and passive model.  They
proved that tree networks can be exactly learned in both models.  In
addition, they also considered the problem for learning non-tree
networks.

Since our focus is on the active model, we start by describing their
algorithm for learning trees in this model, later referred to as the
AHK algorithm.  For any vertex $u \in V$, the algorithm repeatedly
queries with the seed set containing only a single vertex $u$ in order
to infer the set of vertices $\Gamma(u)$ adjacent to $u$.  Let $R_u(v)$ 
be the set of rounds that $v$ becomes infected while $\{u\}$ be the 
seed set, i.e., $R_u(v) = \{i : v \in A_i \mbox{ and } S_i = \{u\}\}$.  
The algorithm infers that $u$ and $v$ are adjacent if and only if there
does not exist a vertex $w \in V \backslash \{u,v\}$ such that $R_u(v)
\subseteq R_u(w)$.  The algorithm needs
$O(\frac{n}{\Delta^2}\log\frac{n}{\delta})$ queries to learn the tree
with probability at least $1-\delta$.
We note that the AHK algorithm could fail when applying to non-tree networks
because a vertex can be infected from the seed set through many
possible paths.

For the passive model, the seed sets are chosen randomly from a
distribution where each vertex is chosen independently.  The algorithm
presented by Amin {\em et al.} for this model employs the lift
function $L(v|u)$ which is the increase in the probability that vertex
$v$ becomes infected when $u$ is in the seed set. The algorithm finds
an estimate $\hat{L}$ of $L$, and if $\hat{L}$ is close to $L$, they
showed that the algorithm can exactly learn the tree.

For non-tree networks, Amin {\em et al.} presented an algorithm,
called the $K$-lifts algorithm, which can be viewed as a generalized
version of the algorithm for learning trees in the passive model.
Given the number of edges $K$, the algorithm returns $K$ pairs of
vertices with highest estimated lift scores as network edges.  The
experimental results showed that the $K$-lifts algorithm performs well
when learning sparse random networks (under the Erd\H{o}s-R\'{e}nyi
model~\cite{ER1959, G1959} and the Small-World model~\cite{WS1998}).
On the positive side, Amin {\em et al.} proved that the $K$-lifts 
algorithm can learn cycle networks.  However, they showed that 
the $K$-lifts algorithm fails to learn a network $H$ on $2n-1$ vertices 
constructed by joining a star with $n$ vertices rooted at vertex $v_0$ 
and an $n$-cycle containing $v_0$ at $v_0$.

{\bf Our Results.}  Here we state our results formally. Our first
result considers large girth networks.  For a network $G=(V,E)$, the
\textit{girth} $g$ of $G$ is the length of the shortest cycle in the
network.  We also need another property related to the number of
simple paths.  Denote by $\mathcal{P}(u,v)$ the set of simple paths
from vertex $u \in V$ to vertex $v \in V$ in $G$, and
$\mathcal{P}_d(u,v)$ the set of paths of length $d$ in
$\mathcal{P}(u,v)$.  Let $p_d$ be the maximum number of simple paths
of length $d$ between any pair of vertices, i.e., $p_d = \max_{u \in
  V, v \in V}|\mathcal{P}_d(u,v)|$.
%Note that for any $d < g/2$, $\gamma_d \leq 1$.
We define the {\em path growth rate} $\rho = \max_{d}(p_d)^{1/d}$.
The parameter $\rho$ intuitively represents the growth rate for the
number of simple paths in the network.  Note that for tree networks,
$g$ can be regarded as $\infty$ and $\rho = 1$.  

We show that if $\rho(1-\Delta) < 1$ and $g>f(\Delta,\rho)$, for some
function $f$ (stated in
Theorem~\ref{thm:label_complexity_for_large_girth_network}), we can
successfully learn the network in the active model with
$O(\frac{n}{\Delta^2}\log\frac{n}{\delta})$ queries with probability
at least $1 - \delta$.  We note that our algorithm can successfully
learn the Amin {\em et al.}'s counter example $H$ (discussed in
Theorem 6 in~\cite{AHK2014}) since the girth of $H$ is $n$ and its
path growth ratio is $2^{2/n}$, which is close to $1$.

The algorithm requires $O(\frac{n}{\Delta^2}\log\frac{n}{\delta})$
active queries, which is the same bound as the AHK algorithm of
Amin~{\em et al.} under the same model.  While the bound itself does
not depend on the values of $\rho$ and $g$, our proofs of correctness
require the network to satisfies certain conditions on $\Delta$, $\rho$
and $g$ (see Theorem~\ref{thm:label_complexity_for_large_girth_network}).

The second result is on the bounded-degree networks.  We show that, if
the maximum degree of the network is $D$, in the active model, these
networks can be exactly learned by a very simple algorithm that makes
at most $O(\frac{n}{\Delta^{2D}} \log \frac{n}{\delta})$ queries with
probability at least $1-\delta$.

%%%%%%%%%%%%%%%%%%%%%%%%%%%%%%%%%%%%%%%%%%%%%%%%%%%%%%%%%%%
%%%%%%%%%%%%%%%%%%%%%%%%%%%%%%%%%%%%%%%%%%%%%%%%%%%%%%%%%%%

\section{Learning Large Girth Networks}
\label{section:large_girth}

This section describes an algorithm that learns large-girth networks.
We start with a crucial lemma on the properties of these networks.  As
in the AHK algorithm, we would like to filter out non-adjacent pairs
of vertices.  We focus on pairs of vertices that are close, but not
adjacent.  Let $d_{uv}$ be the shortest path distance from $u$ to $v$.

The following lemma is a key observation.

\begin{lemma}\label{lemma:unique_short_stp}
Let $G=(V,E)$ be a network with girth $g$. For any pair of vertices
$u, v \in V$ such that $1<d_{uv}<g/2$, there is a unique shortest path
from $u$ to $v$, and all other paths from $u$ to $v$ have length 
greater than $g/2$.
\end{lemma}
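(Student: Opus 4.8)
The plan is to prove both assertions by contradiction, in each case extracting from two distinct $u$--$v$ paths a cycle whose length is strictly smaller than $g$, which is impossible since every cycle in $G$ has length at least $g$.

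Concretely, fix $u,v$ with $1<d_{uv}<g/2$ and let $P$ be a shortest $u$--$v$ path, so $|P|=d_{uv}$. Suppose toward a contradiction that there is a $u$--$v$ path $Q\neq P$ with $|Q|\le g/2$; taking $Q$ to be a second shortest path will yield the uniqueness claim, while taking $Q$ arbitrary (of length at most $g/2$) will yield the second claim, since the contradiction then rules out $|Q|\le g/2$. I would first reduce to simple paths: a $u$--$v$ walk of length at most $g/2$ either is already simple or contains a closed sub-walk, hence a cycle of length at most $g/2<g$, impossible; so $Q$ may be taken simple. Next I would carry out a first-divergence/first-reconvergence argument. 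Traversing $P$ and $Q$ from $u$, let $a$ be the last vertex up to which they agree; since $P\neq Q$ but both end at $v$, the vertex $a$ is well defined and $a\neq v$, and immediately after $a$ the two paths use different edges. Following $P$ from $a$ toward $v$, let $b$ be the first vertex that also lies on the portion of $Q$ strictly after $a$ (it exists, since $v$ qualifies). One checks that the arc of $P$ from $a$ to $b$ and the arc of $Q$ from $a$ to $b$ meet only at $a$ and $b$, so their union is a genuine cycle $C$ (nondegenerate because the two arcs start with distinct edges). Finally, $|C|\le |P|+|Q| = d_{uv}+|Q| \le d_{uv}+g/2 < g/2+g/2 = g$, using $d_{uv}<g/2$ (and in the uniqueness case even $|C|\le 2d_{uv}<g$), contradicting the girth bound. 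Hence $P$ is the unique shortest $u$--$v$ path and every other $u$--$v$ path has length greater than $g/2$.

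I expect the only genuine obstacle to be the bookkeeping in the cycle-extraction step: one must choose $a$ and $b$ so that the two arcs between them share no internal vertex, for otherwise $P\cup Q$ is merely a theta-like subgraph rather than a single cycle, and one must verify the resulting cycle is nontrivial. Once that is settled, the length estimate is an immediate consequence of $d_{uv}<g/2$ and $|Q|\le g/2$.
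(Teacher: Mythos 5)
Your proposal is correct and follows essentially the same route as the paper: assume a second $u$--$v$ path of length at most $g/2$ and derive a cycle of length less than $g$ from the two paths, contradicting the girth. The only difference is that you carefully justify the cycle-extraction (first divergence, first reconvergence) that the paper compresses into the single phrase ``take a union of $P_1$ and $P_2$ and obtain a cycle of length at most $k_1+k_2$,'' so your write-up is a more detailed version of the same argument.
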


\begin{proof}
  We prove by contradiction. Let $P_1$ be the shortest path from $u$
  to $v$ of length $k_1 < g/2$ and let $P_2$ be another path from $u$
  to $v$ of length $k_2$ where $k_1 \leq k_2 \leq g/2$.  We can take a
  union of $P_1$ and $P_2$ and obtain a cycle of length at most $k_1 +
  k_2 < g$, contradicting the fact that $G$ has girth $g$.
\end{proof}

Using Lemma~\ref{lemma:unique_short_stp} with appropriate value of
$g$, we can show that it is very unlikely that, when $\{u\}$ is the
seed set, $v$ is infected through paths other than the unique shortest
path from $u$.  This implies that in most rounds when $v$ is infected,
every intermediate vertex $w$ in the shortest path from $u$ to $v$
must be infected as well.

Recall that $R_u(v)$ be the set of rounds that $v$ becomes infected
while $\{u\}$ be the seed set.  In a tree network, the rejection
criteria of the AHK algorithm works because $R_u(v)\subseteq R_u(w)$
for any intermediate vertex $w$ in the shortest path from $u$ to $v$.
However, in general, since $v$ can be infected through other paths, we
need a milder criteria.  Instead of requiring that $R_u(v)\subseteq
R_u(w)$ to reject $(u,v)$, we shall reject $(u,v)$ as an edge when
there exists a vertex $w$ such that $w$ appears too often with $v$,
i.e., when the set $R_u(v)\cap R_u(w)$ is large.

Let $m$ be the number of rounds that we query for a single seed set
(to be specified later).  Our modification of the AHK algorithm to
learn large girth networks is shown in Algorithm~\ref{alg:girth}.
Note that although the contagion parameter $\Delta$ is required to
make decision in Algorithm~\ref{alg:girth}, it is enough to use its
lower bound. The AHK algorithm itself does not require $\Delta$, but
the parameter is implicitly needed to make sure that the number of
queries is large enough.

\algsetup{indent=2em}
\begin{algorithm}[h!]
\caption{Algorithm for learning a large girth network $G=(V,E)$}\label{alg:girth}
\begin{algorithmic}
\STATE $E' \leftarrow \emptyset$
\FORALL{$u$ in $V$}
	\FOR{$i=1$ to $m$}
		\STATE query with seed set $S_i = \{u\}$, then receive the set of infected vertices $A_i$
	\ENDFOR
	\FORALL{$v$ in $V \backslash \{u\}$}
		\IF{$\forall w \in V \backslash \{u,v\}$, $|R_u(v) \backslash R_u(w)| > \frac{3\Delta^2m}{8}$}
			\STATE $E' \leftarrow E' \cup \{(u,v)\}$
		\ENDIF
	\ENDFOR
\ENDFOR
\RETURN $E'$.
\end{algorithmic}
\end{algorithm}

We shall show that the Algorithm~\ref{alg:girth} returns edges $E$ with high
probability after querying $M = nm$ rounds in total, if $m$ is large enough. 

We would like to point out that our algorithm works only when
$\rho(1-\Delta) < 1$.  This bound is essential for preventing issues
that may occur when high growth rate compensates the infectious
failure based on our analysis techniques.  See a discussion at the end
of Lemma~\ref{lemma:prob_active_path_length_k}.

After each round of query we say that path $P$ is \textit{active} if
every edge in $P$ is active.  (Note that each edge must be active in
the right direction.)  On the other hand, $P$ is \textit{inactive} if
there exists an inactive edge in the path.
  
The next lemma shows that if $\rho(1-\Delta) < 1$, the probability
that there is an active path of length $k$ from vertex $u$ to vertex
$v$ depends on $\rho(1-\Delta)$.

\begin{lemma}\label{lemma:prob_active_path_length_k}
  For any network $G=(V,E)$, if parameter $\rho$ of $G$ satisfies the
  condition that $\rho < 1/(1-\Delta)$, then for any vertex $u \in V$
  and vertex $v \in V$, the probability that $u$ infects $v$ along any
  paths of length at least $k$ is at most
  $\frac{(\rho(1-\Delta))^k}{1-\rho(1-\Delta)}$.
\end{lemma}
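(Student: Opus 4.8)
The plan is to bound the relevant bad event by a union bound over simple paths and then sum a geometric series. First I would record the elementary fact that every infectious probability is at most $1-\Delta$: since $\Delta = \min\{\alpha,1-\beta\}$ we have $\Delta \le 1-\beta$, hence $p_{uv} \le \beta \le 1-\Delta$ for every edge $(u,v)$. Consequently, for any fixed path $P$ from $u$ to $v$ of length $d$, the probability that $P$ is active --- i.e., that all $d$ of its edges fire in the correct direction --- is the product of the corresponding infectious probabilities and is therefore at most $(1-\Delta)^d$. (Formally one should note that the per-edge coin tosses are independent; imagining all coins tossed in advance makes this precise and can only dominate the actual contagion.)

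Next I would argue that whenever $u$ infects $v$ through a path of length at least $k$, there must exist an active \emph{simple} path of length at least $k$ from $u$ to $v$: the independent cascade process assigns to each non-seed infected vertex a unique infecting parent, so the infection structure rooted at $u$ is a forest, and the branch from $u$ to $v$ in it is a simple path all of whose edges are active in the forward direction. Hence the event ``$u$ infects $v$ along a path of length at least $k$'' is contained in $\bigcup_{d\ge k}\bigcup_{P \in \mathcal{P}_d(u,v)}\{P \text{ is active}\}$. Applying the union bound together with the per-path estimate above and the definitions $|\mathcal{P}_d(u,v)| \le p_d$ and $p_d \le \rho^d$ (immediate from $\rho = \max_d p_d^{1/d}$) gives
\[
  \Pr\!\left[u \text{ infects } v \text{ along a path of length} \ge k\right]
  \;\le\; \sum_{d\ge k} p_d (1-\Delta)^d
  \;\le\; \sum_{d\ge k} \big(\rho(1-\Delta)\big)^d .
\]
Under the hypothesis $\rho(1-\Delta) < 1$ this is a convergent geometric series summing to $\frac{(\rho(1-\Delta))^k}{1-\rho(1-\Delta)}$, which is the claimed bound.

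The only point requiring care is the reduction in the second step: one must be sure that the event ``$u$ infects $v$ via a long route'' is genuinely subsumed by the existence of a long active \emph{simple} path, so that restricting the union bound to $\mathcal{P}_d(u,v)$ (simple paths only) is justified; this is exactly where the tree-like structure of the independent cascade process enters, and everything else is a routine computation. I would also include, as promised in the surrounding text, the remark that the assumption $\rho(1-\Delta) < 1$ is precisely what the geometric sum needs in order to converge: once the path growth rate $\rho$ is large enough to offset the decay factor $1-\Delta$, this union-bound argument yields nothing, and indeed the unique shortest path can no longer be expected to dominate the probability that $v$ becomes infected.
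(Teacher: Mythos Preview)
Your proposal is correct and follows essentially the same approach as the paper: a union bound over simple paths in $\mathcal{P}_d(u,v)$, the per-path estimate $(1-\Delta)^d$ (the paper writes $\beta^d$ first, then uses $\beta\le 1-\Delta$), the bound $|\mathcal{P}_d(u,v)|\le\rho^d$, and the geometric-series summation. The paper sums only over $d=k,\ldots,n-1$ rather than $d\ge k$, and it does not spell out (as you do) why restricting to simple paths is legitimate, but the argument is otherwise identical.
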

\begin{proof}
Using the union bound, the probability that $u$ infects $v$ along any
paths of length at least $k$ is at most
\begin{eqnarray*}
\sum_{d=k}^{n-1}{(|{\mathcal P}_d(u,v)| \times \beta^d})	
			&\leq& \sum_{d=k}^{n-1}{ \rho^d \beta^d } 
			\leq \sum_{d=k}^{n-1}{ (\rho(1-\Delta))^d } \\
			&=& \frac{ (\rho(1-\Delta))^k - (\rho(1-\Delta))^n }{ 1-\rho(1-\Delta) } 
			\leq \frac{ (\rho(1-\Delta))^k }{ 1-\rho(1-\Delta) }.
\end{eqnarray*}
Note that we use the fact that $\rho(1-\Delta)<1$ in the last inequality.
\end{proof}

The requirement that $\rho(1-\Delta)<1$ is essential to ensure that
the sum $\sum_{d=k}^{n-1}\rho^d\beta^d$ converges nicely even when 
$n$ is large.  Note that when the requirement is not true, the contagion
process starting at a single seed vertex can reach a vertex very far from
the seed.  To see this, take a perfect $k$-any tree with $L$ levels.
The contagion process starting at the root resembles the branching
process where the offspring distribution is binomial with parameter
$k$ and $p$, where $p$ is the infectious probability.  It is known
that if the infectious probability is $1/k+\epsilon$, it is very
likely that some leaf on the $L$-th level will be infected.  Since our
analysis uses a simple union bound that neglects dependencies between
paths, it fails to distinguish this situation with the one where a lot
of leaves are infected, and finally it fails to show that the probability
that a particular node far away from the seed becomes infected is very small.

From Lemma~\ref{lemma:prob_active_path_length_k}, we have the next
corollary that provides the lower bound on the girth so that the
probability of having long active paths is at most $\Delta^2/4$.  The
ceiling function appears because
Lemma~\ref{lemma:prob_active_path_length_k} works only when $k$ is an
integer, and as a by-product, that the lower bound on $g$ is even.

\begin{corollary}\label{corollary:exist_long_path_prob_is_small}
For a contagion process with parameter $\Delta$ over a network
$G=(V,E)$, if the girth $g$ of $G$ and the path growth rate $\rho$
satisfy the following inequalities:
\begin{equation} \label{eq:growth_bounded}
	1 \leq \rho < \frac{1}{1-\Delta}
\end{equation}
\begin{equation} \label{eq:girth_bounded}
	g \geq 2\left\lceil\frac{ 2\log\frac{\Delta}{2} + \log(1-\rho(1-\Delta)) } { \log \rho(1-\Delta) }\right\rceil
\end{equation}
then for any pair of vertices $u \in V$ and $v \in V$, the probability
that there exists an active path of length at least $g/2$
between $u$ and $v$ is at most $\Delta^2/4$.
\end{corollary}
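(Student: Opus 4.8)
The plan is to combine Lemma~\ref{lemma:prob_active_path_length_k} with the explicit choice of $k = g/2$ dictated by the hypothesis on the girth. First I would set $k = g/2$, which is an integer because the right-hand side of \eqref{eq:girth_bounded} is twice a ceiling. Applying Lemma~\ref{lemma:prob_active_path_length_k} with this value of $k$ (valid since \eqref{eq:growth_bounded} gives exactly the hypothesis $\rho < 1/(1-\Delta)$), the probability that $u$ infects $v$ along a path of length at least $g/2$ is at most $(\rho(1-\Delta))^{g/2}/(1 - \rho(1-\Delta))$.

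The remaining task is purely algebraic: show that $(\rho(1-\Delta))^{g/2}/(1-\rho(1-\Delta)) \leq \Delta^2/4$. Taking logarithms (and using that $\log \rho(1-\Delta) < 0$, so the inequality direction flips when we divide), this is equivalent to
\[
\frac{g}{2} \geq \frac{2\log\frac{\Delta}{2} + \log(1-\rho(1-\Delta))}{\log \rho(1-\Delta)},
\]
which is precisely the bound \eqref{eq:girth_bounded} after dividing by $2$. Since $g/2 \geq \lceil \cdot \rceil \geq \cdot$, the inequality holds, and monotonicity of $x \mapsto (\rho(1-\Delta))^x$ (a decreasing function) lets us conclude. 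The only subtlety is bookkeeping the sign of $\log \rho(1-\Delta)$ and the direction of inequalities at each step; I would be careful to note that $\rho(1-\Delta) < 1$ makes this logarithm negative, and also that $\Delta \le 1$ (indeed $\Delta \le 1/2$ since $\Delta = \min\{\alpha, 1-\beta\}$ with $\alpha \le \beta$) keeps the numerator's sign under control.

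The main obstacle, if any, is just making sure the ceiling in \eqref{eq:girth_bounded} is handled cleanly: because $g/2$ is an integer that is at least the ceiling of the target quantity, and the ceiling is at least that quantity itself, we can feed the integer $k = g/2$ directly into Lemma~\ref{lemma:prob_active_path_length_k} without any loss. I expect the whole proof to be a short chain of inequalities with no genuine difficulty beyond this.
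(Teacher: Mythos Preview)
Your proposal is correct and matches the paper's approach: the paper does not write out a proof but simply remarks that the corollary follows from Lemma~\ref{lemma:prob_active_path_length_k} by solving for the threshold on $k$, which is exactly the algebraic manipulation you describe. One small slip: the hypothesis $g \geq 2\lceil\cdot\rceil$ only says $g$ is at least an even number, not that $g$ itself is even, so $g/2$ need not be an integer; the paper handles this separately by assuming $g$ even w.l.o.g., and in any case taking $k=\lceil g/2\rceil$ in the lemma gives a bound no worse than $(\rho(1-\Delta))^{g/2}/(1-\rho(1-\Delta))$, so your chain of inequalities still goes through.
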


We shall use the bound of $g$ in the previous corollary as the lower bound
of the girth.  Note that the lower bound is not extremely large.  When
$\Delta=1/2$ and $\rho=1.25$, the algorithm works when $g\geq 16$.
When $\Delta=1/2$ and $\rho=1.5$, $g\geq 30$.

Later on in this section, we assume that we work on the network whose
parameter $\rho$ and girth $g$ satisfy inequalities
(\ref{eq:growth_bounded}) and (\ref{eq:girth_bounded}), respectively.
Moreover, for technical reasons (see the proof of
Lemma~\ref{lemma:include_adjacent_vertices}), we also assume w.l.o.g.
that $g$ is even.  When the girth $g$ of the network is odd but
satisfies condition (2) from
Corollary~\ref{corollary:exist_long_path_prob_is_small}, we can take
$g'=g-1$ as its lower bound on the girth and apply the results.

Our main theorem shows that for any network $G=(V,E)$ in this class,
the Algorithm~\ref{alg:girth} returns the edges $E$ with high
probability.  To prove the theorem, we need the following 3 lemmas.

The following lemma deals with the case that $(u,v)$ is an edge in the
network.

\begin{lemma}\label{lemma:include_adjacent_vertices}
Assume that the network does not have any cycle of length shorter than
$g$, when $g$ is even, and all the network parameters satisfy the
conditions in Corollary~\ref{corollary:exist_long_path_prob_is_small}.
For any pair of adjacent vertices $u,v \in V$ and any vertex $w\in
V\setminus\{u,v\}$, in any round $i$ where the algorithm queries with
seed set $S_i=\{u\}$, the probability that $v\in A_i$, but $w\notin
A_i$ is at least $7\Delta^2 / 8$.  Thus, the expected size of $R_u(v)
\backslash R_u(w)$ is at least $7\Delta^2m / 8$.
\end{lemma}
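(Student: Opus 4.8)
The plan is to establish the single‑round estimate $\Pr[v\in A_i,\ w\notin A_i]\ge\tfrac{7}{8}\Delta^2$ for a round with seed $\{u\}$, and then deduce the statement about $R_u(v)\setminus R_u(w)$ by linearity of expectation over the $m$ independent rounds. For the single round I would work in the standard live‑edge reformulation of the independent cascade model: toss an independent $\mathrm{Bernoulli}(p_{ab})$ coin for every directed edge $a\to b$, call $a\to b$ \emph{active} when its coin is $1$, and note that with seed $\{u\}$ a vertex $z$ lies in $A_i$ exactly when $u$ reaches $z$ along active directed edges. I would also record the elementary fact $\Delta\le\tfrac12$ (from $\alpha\le\beta$, so $2\Delta\le\alpha+(1-\beta)\le 1$), which is precisely what upgrades the crude constant $\tfrac34$ to $\tfrac78$. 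Then split on $d_{uw}$. If $d_{uw}\ge g/2$, every $u$-$w$ path has length at least $g/2$, so Corollary~\ref{corollary:exist_long_path_prob_is_small} gives $\Pr[w\in A_i]\le\Delta^2/4$; since the edge $(u,v)$ alone forces $v\in A_i$ we have $\Pr[v\in A_i]\ge p_{uv}\ge\alpha\ge\Delta$, hence $\Pr[v\in A_i,\ w\notin A_i]\ge\Pr[v\in A_i]-\Pr[w\in A_i]\ge\Delta-\Delta^2/4\ge\tfrac78\Delta^2$.

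The substantive case is $d_{uw}<g/2$. Here, by Lemma~\ref{lemma:unique_short_stp} when $1<d_{uw}<g/2$ and directly from the girth when $d_{uw}=1$ (any second $u$-$w$ path closes a cycle of length $\ge g$, hence has length $>g/2$), there is a \emph{unique} shortest $u$-$w$ path $P$, and every other $u$-$w$ path has length $>g/2$. Let $e_0$ be the first edge of $P$, taken in the orientation $P$ traverses it — unless that edge equals $(u,v)$, which forces $d_{uw}\ge2$, in which case let $e_0$ be the (oriented) second edge of $P$; in either case $e_0$ is a directed edge distinct from $u\to v$. Let $B$ be the event that $e_0$ is inactive, so $\Pr[B]=1-p_{e_0}\ge1-\beta\ge\Delta$. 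Conditioned on $B$: (i) $v\in A_i$ with probability at least $p_{uv}$, since $u\to v$ is active with that probability and is independent of $e_0$; and (ii) if $w\in A_i$ then there is an active $u$-$w$ path, which cannot be $P$ (as $P$ uses the inactive $e_0$), hence has length $>g/2$ and does not traverse $e_0$ — an event depending only on directed edges other than $e_0$, thus independent of $B$ and, by Corollary~\ref{corollary:exist_long_path_prob_is_small}, of probability at most $\Delta^2/4$. Therefore $\Pr[v\in A_i,\ w\notin A_i\mid B]\ge p_{uv}-\Delta^2/4\ge\alpha-\Delta^2/4$, and
\[
\Pr[v\in A_i,\ w\notin A_i]\ \ge\ \Pr[B]\bigl(\alpha-\tfrac{\Delta^2}{4}\bigr)\ \ge\ (1-\beta)\bigl(\alpha-\tfrac{\Delta^2}{4}\bigr)\ \ge\ \Delta\bigl(\Delta-\tfrac{\Delta^2}{4}\bigr)\ =\ \Delta^2\bigl(1-\tfrac{\Delta}{4}\bigr)\ \ge\ \tfrac{7}{8}\Delta^2 ,
\]
the last step using $\Delta\le\tfrac12$. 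Combining the two cases gives the single‑round bound, and summing over the $m$ rounds shows the expected size of $R_u(v)\setminus R_u(w)$ is at least $\tfrac78\Delta^2 m$.

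The hard part will be getting the order of the conditioning right in the second case. The naive attempt — insisting that ``$u\to v$ active'' and ``$e_0$ inactive'' both hold (probability $\ge\Delta^2$) and then subtracting the $\Delta^2/4$ failure bound of Corollary~\ref{corollary:exist_long_path_prob_is_small} — yields only $\tfrac34\Delta^2$; one must instead pull $\Pr[B]$ out first and perform the subtraction $\Pr[v\in A_i]-\Pr[w\in A_i]\ge p_{uv}-\Delta^2/4$ \emph{inside} the conditional measure. The remaining effort is combinatorial care: verifying that $e_0$ is always well defined and never coincides with $u\to v$ (the reason for sliding to the second edge of $P$ when $v$ happens to lie on $P$), and checking each independence assertion — in particular that an active simple $u$-$w$ path never traverses $e_0$ in its inactive orientation, so that the event in (ii) is genuinely independent of $B$.
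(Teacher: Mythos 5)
Your proof is correct, and it rests on the same two pillars as the paper's: the uniqueness of short shortest paths (Lemma~\ref{lemma:unique_short_stp}, extended to the adjacent case via the girth) and the bound of Corollary~\ref{corollary:exist_long_path_prob_is_small} on long active paths, together with the observation $\Delta\le 1/2$. The bookkeeping differs, though. The paper splits on whether $v$ lies on a shortest $u$--$w$ path, conditions on the \emph{two} events ``$(u,v)$ active'' and ``a designated shortest-path edge $e$ inactive'' (probability at least $\Delta^2$), and then bounds the conditional probability of an active $u$--$w$ path by $\Delta^2/4+\Delta^2/4\le 1/8$, giving $\Delta^2\cdot\frac{7}{8}$ multiplicatively; so your remark that one \emph{must} pull $\Pr[B]$ out first to avoid the $\frac{3}{4}\Delta^2$ trap is not accurate as a statement of necessity --- the paper reaches $\frac{7}{8}\Delta^2$ with the two-event conditioning precisely because it multiplies by $(1-1/8)$ rather than subtracting $\Delta^2/4$ from $\Delta^2$. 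What your route buys is cleaner handling of the conditioning: by fixing a single directed edge $e_0$ on the unique shortest path, working in the live-edge coin space, and restricting to active paths avoiding $e_0$, the event you feed to Corollary~\ref{corollary:exist_long_path_prob_is_small} is genuinely independent of the conditioning event, whereas the paper applies the corollary to a conditional probability without spelling out that independence. You also make explicit two corner cases the paper passes over quickly ($d_{uw}\ge g/2$, handled without any conditioning, and $d_{uw}=1$, where uniqueness comes directly from the girth rather than from Lemma~\ref{lemma:unique_short_stp}), and your final constant arrives as $\Delta^2(1-\Delta/4)\ge\frac{7}{8}\Delta^2$ instead of the paper's $\Delta^2(1-\Delta^2/2)\ge\frac{7}{8}\Delta^2$ --- both valid.
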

\begin{proof}
We first analyze the probability. There are two cases.

{\em Case 1:} $v$ is in a shortest path from $u$ to $w$. Let $P$ be
the shortest path from $u$ to $w$ containing $v$.  Note that since
$(u,v)\in E$, edge $(u,v)$ is the first edge in the path.  Let $e$ be
an edge in $P$ adjacent to edge $(u,v)$, i.e., $e$ is the next edge
after $(u,v)$ in $P$.

Let $\mathcal{A}$ be the event that $(u,v)$ is active, $\mathcal{B}$
be the event that $e$ is active and $\mathcal{C}$ be the event that
there exists an active path in ${\mathcal P}(u,w)$.  Note that when
event $\mathcal{A}\cap\overline{\mathcal B}\cap\overline{\mathcal C}$
occurs, we know that $v \in A_i$ and $w \notin A_i$.  Thus, we have
\begin{eqnarray*}
\Pr[v \in A_i,w \notin A_i] &\geq&	\Pr[\mathcal{A} \cap \overline{\mathcal{B}} \cap \overline{\mathcal{C}}] = \Pr[\mathcal{A}] \times \Pr[\overline{\mathcal{B}} | \mathcal{A}] \times \Pr[\overline{\mathcal{C}} | \mathcal{A} \cap \overline{\mathcal{B}}] \\
                             &\geq& \Delta^2 \times (1 - \Pr[\mathcal{C} | \mathcal{A} \cap \overline{\mathcal{B}}])
\end{eqnarray*}
The last inequality holds because $\mathcal{A}$ and $\mathcal{B}$ are
independent, and each occurs with probability at least $\Delta$.

We are left to compute the upper bound of the probability of the event
$\mathcal{C}$ given $\mathcal{A} \cap \overline{\mathcal{B}}$.  The
condition $\mathcal{A} \cap \overline{\mathcal{B}}$ implies that $P$,
which is a shortest path, is inactive.  There are two possible
subcases that $w$ can be infected: (i) from a path $P'$ in ${\mathcal
  P}(u,w)$ that uses edge $(u,v)$ or (ii) from a path $P''$ in
${\mathcal P}(u,w)$ that does not use $(u,v)$.

Let's consider subcase (i) first.  Since $P$ is a shortest path; the
postfix $P_v$ of $P$ starting at $v$ ending at $w$ is also a shortest
path from $v$ to $w$.  Let $P'_v$ be the postfix of $P'$ starting at
$v$.  We claim that $P'_v$ is of length at least $g/2$.  This is true
when the shortest path $P_v$ is of length at least $g/2$.  Thus,
assume otherwise, i.e., $P_v$ is of length less than $g/2$; applying
Lemma~\ref{lemma:unique_short_stp}, we have that all other paths from
$v$ to $w$ are of length greater than $g/2$ as required.  Since the 
paths are long, Corollary~\ref{corollary:exist_long_path_prob_is_small} 
implies that the probability that we have an active path is at most 
$\Delta^2/4\leq 1/16$.

Next, consider subcase (ii).  Using the same argument as in subcase (i),
we have that $P''$ is of length at least $g/2$; thus, the probability 
that we have an active path in this case is at most $\Delta^2/4\leq 1/16$.

Combining these two subcases using the union bound, we have that
$\Pr[\mathcal{C}|\mathcal{A} \cap\overline{\mathcal{B}}]\leq
\Delta^2/4+\Delta^2/4 \leq 1/8$.  Therefore, the probability that $v
\in A_i$ and $w \notin A_i$ is at least $7\Delta^2/8$.

{\em Case 2:} $v$ is not in any shortest paths between $u$ and $w$.
Let $P$ be a shortest path from $u$ to $w$ and $e$ be an edge in $P$
that is adjacent to $u$ (i.e., $e$ is the first edge in $P$).
Provided that $(u,v)$ is active but $e$ is inactive, $w\in A_i$ only
when there exists an active path in $\mathcal{P}(u,w)$.  Again, let
${\mathcal A}$ be the event that $(u,v)$ is active, $\mathcal{B}$ be
the event that $e$ is active, and $\mathcal{C}$ be the event that
there exists an active path in ${\mathcal P}(u,w)$.  As in the
previous case, we have that
$ \Pr[v \in A_i,w \notin A_i] \geq \Pr[\mathcal{A} \cap \overline{\mathcal{B}} \cap \overline{\mathcal{C}}] $.

We focus on the event ${\mathcal C}$ given
$\mathcal{A}\cap\overline{\mathcal B}$.  If an active path $P'$ in
${\mathcal P}(u,w)$ uses edge $(u,v)$, it has to be of length greater
than $g/2$ because $v$ is not in any shortest paths from $u$ to $w$.
Since $g$ is even, $g/2$ is an integer; thus, the length of $P'$ is at
least $g/2+1$.  This implies that the postfix of $P'$ starting at $v$
is of length at least $g/2$.  From
Corollary~\ref{corollary:exist_long_path_prob_is_small}, the
probability that there exists an active path in this case is at
most $\Delta^2/4\leq 1/16$.  On the other hand, as $e$ is inactive, an
active path from $u$ to $w$, not going through $(u,v)$, has length at
least $g/2$.  Again,
Corollary~\ref{corollary:exist_long_path_prob_is_small} implies that
the probability of this case is at most $\Delta^2/4\leq 1/16$.  With
the union bound, we have that $\Pr[\mathcal{C} | \mathcal{A} \cap
  \overline{\mathcal{B}}]\leq 1/16 + 1/16 = 1/8$.  Hence, $\Pr[v \in
  A_i,w \notin A_i]$ is at least
\[
\Pr[\mathcal{A} \cap \overline{\mathcal{B}} \cap \overline{\mathcal{C}}]
=
\Pr[\mathcal{A}] \times \Pr[\overline{\mathcal{B}} | \mathcal{A}] \times \Pr[\overline{\mathcal{C}} | \mathcal{A} \cap \overline{\mathcal{B}}]
\geq \Delta^2 \times (7/8) = \frac{7\Delta^2}{8}.
\]

Since for both cases, the probability $\Pr[v \in A_i,w \notin A_i]$ is
at least $7\Delta^2/8$ and the algorithm makes $m$ rounds of
queries with seed set $\{u\}$, the expected size of $R_u(v)
\backslash R_u(w)$ is at least $7\Delta^2m/8$, as required.
\end{proof}

The next two lemmas consider non-adjacent vertices $u$ and $v$.  When
$u$ is close to $v$, we use Lemma~\ref{lemma:exclude_close_vertices};
otherwise, we use Lemma~\ref{lemma:exclude_far_vertices}, whose proof
uses Corollary~\ref{corollary:exist_long_path_prob_is_small} and is
omitted to save space.

\begin{lemma}\label{lemma:exclude_close_vertices}
For any pair of non-adjacent vertices $u, v \in V$ where $d_{uv} < g/2$, 
there exists a vertex $w \in V \backslash \{u,v\}$ such that in any round $i$ 
where the algorithm queries with the seed set $S_i = \{u\}$, the probability 
that $v \in A_i$, but $w \notin A_i$ is at most $\Delta^2/4$. Thus, the expected 
size of $R_u(v) \backslash R_u(w)$ is at most $\Delta^2m/4$.
\end{lemma}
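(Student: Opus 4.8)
The plan is to take $w$ to be the \emph{penultimate} vertex on the unique shortest path from $u$ to $v$, and to show that whenever $v$ is infected but $w$ is not, the infection must have reached $v$ along a path that is too long to be likely active. First I would record the basic facts: since $u$ and $v$ are non-adjacent we have $d_{uv}\ge 2$, and since $d_{uv}<g/2$ we are in the range $1<d_{uv}<g/2$, so Lemma~\ref{lemma:unique_short_stp} applies. It gives a unique shortest $u$--$v$ path $P$, guarantees that every other $u$--$v$ path has length greater than $g/2$, and (as $d_{uv}\ge 2$) ensures $P$ has at least one internal vertex; hence its penultimate vertex $w$, the neighbour of $v$ on $P$, lies in $V\setminus\{u,v\}$.

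Next I would establish the key implication. Fix a round $i$ with $S_i=\{u\}$ and suppose $v\in A_i$. Then $v$ became infected along some walk of active (correctly oriented) edges from $u$; deleting cycles from this walk preserves activeness, so there is a \emph{simple} active path $Q$ from $u$ to $v$. Since every vertex of an active path is infected, if additionally $w\notin A_i$ then $Q$ does not contain $w$, so $Q\neq P$, and therefore by Lemma~\ref{lemma:unique_short_stp} the length of $Q$ is greater than $g/2$; because $g$ is even this means $|Q|\ge g/2+1\ge g/2$. Thus the event $\{v\in A_i,\ w\notin A_i\}$ is contained in the event that there exists an active $u$--$v$ path of length at least $g/2$, whose probability is at most $\Delta^2/4$ by Corollary~\ref{corollary:exist_long_path_prob_is_small}. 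This proves the first claim.

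Finally, the bound on the expected size of $R_u(v)\setminus R_u(w)$ follows by linearity of expectation: round $i$ contributes to $R_u(v)\setminus R_u(w)$ precisely when $v\in A_i$ and $w\notin A_i$, an event of probability at most $\Delta^2/4$ by the above, and the algorithm performs $m$ such rounds with seed set $\{u\}$, so the expectation is at most $\Delta^2 m/4$.

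I expect the only delicate step to be the passage from "$v$ infected and $w$ not infected" to "there is an active simple path from $u$ to $v$ of length at least $g/2$": one must argue that the witnessing path can be taken simple and that it genuinely omits $w$, so that the clean dichotomy of Lemma~\ref{lemma:unique_short_stp} (unique short path vs.\ everything else being long) can be invoked. Once this is in place, Corollary~\ref{corollary:exist_long_path_prob_is_small} and linearity of expectation finish the argument with no further computation.
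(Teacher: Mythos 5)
Your proof is correct and follows essentially the same route as the paper: pick a vertex $w$ on the unique shortest $u$--$v$ path, argue that on the event $\{v\in A_i,\ w\notin A_i\}$ the infection must have reached $v$ along a non-shortest path, invoke Lemma~\ref{lemma:unique_short_stp} to conclude that path has length greater than $g/2$, and finish with Corollary~\ref{corollary:exist_long_path_prob_is_small} and linearity of expectation. The only cosmetic difference is the choice of witness: the paper takes $w$ to be the \emph{second} vertex on the shortest path (the neighbour of $u$), so that $w\notin A_i$ immediately forces edge $(u,w)$, and hence the whole shortest path, to be inactive, whereas your penultimate-vertex choice needs the (correct) observation that every vertex on the active infection path is infected; both choices work.
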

\begin{proof}
From Lemma~\ref{lemma:unique_short_stp}, there is only one shortest
path from $u$ to $v$.  Let $w$ be the second vertex in the shortest
path from $u$ to $v$.  Consider the case that $v \in A_i$ but $w
\notin A_i$. In this case, the edge $(u,w)$ must be inactive.  This
implies that the shortest path from $u$ to $v$ is also inactive, thus
the seed $u$ infects $v$ along a non-shortest path. From
Lemma~\ref{lemma:unique_short_stp}, any non-shortest paths is of
length greater than $g/2$.  Using
Corollary~\ref{corollary:exist_long_path_prob_is_small}, we have that
$\Pr[v \in A_i, w \notin A_i] \leq \Delta^2/4$. Since the algorithm
makes $m$ rounds of queries with seed set $\{u\}$, the expected size
of $R_u(v) \backslash R_u(w)$ is at most $\Delta^2m/4$.
\end{proof}

\begin{lemma}\label{lemma:exclude_far_vertices}
For any pair of non-adjacent vertices $u, v \in V$ where $d_{uv} \geq
g/2$, in any round $i$ where the algorithm queries with the seed set
$S_i = \{u\}$, the probability that $v \in A_i$ is at most
$\Delta^2/4$.  Thus, the expected size of $R_u(v)$ is at most
$\Delta^2m/4$.
\end{lemma}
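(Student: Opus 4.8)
The plan is to bound the probability that the seed $u$ infects a vertex $v$ at distance $d_{uv}\geq g/2$ directly via Corollary~\ref{corollary:exist_long_path_prob_is_small}, and then convert the per-round probability bound into a bound on the expected size of $R_u(v)$ by linearity of expectation.

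First I would observe that, since $S_i=\{u\}$, the event $v\in A_i$ occurs only if there is some active path from $u$ to $v$ in the contagion for round $i$; any path realizing the infection of $v$ is a simple path contained in $\mathcal{P}(u,v)$. Since $d_{uv}\geq g/2$, every path in $\mathcal{P}(u,v)$ — in particular every active one — has length at least $g/2$. Hence the event $v\in A_i$ is contained in the event that there exists an active path of length at least $g/2$ between $u$ and $v$. Applying Corollary~\ref{corollary:exist_long_path_prob_is_small} (whose hypotheses hold by our standing assumptions on $\rho$ and $g$), this latter event has probability at most $\Delta^2/4$, so $\Pr[v\in A_i]\leq \Delta^2/4$.

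For the second statement, note that $R_u(v)=\{i : v\in A_i,\ S_i=\{u\}\}$, so $|R_u(v)|=\sum_{i=1}^m \mathbf{1}[v\in A_i]$ over the $m$ rounds in which the algorithm queries with seed $\{u\}$. By linearity of expectation, $\mathbb{E}[|R_u(v)|]=\sum_{i=1}^m \Pr[v\in A_i]\leq m\cdot(\Delta^2/4)=\Delta^2 m/4$, as claimed.

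There is essentially no obstacle here — the only subtlety, and the one place where care is warranted, is the reduction in the first step: one must be sure that the infection-of-$v$ event really does imply the existence of an \emph{active path} (in the directed sense defined earlier) of length at least $g/2$, rather than merely a sequence of infections that could in principle arrive by some other mechanism. This is immediate from the independent cascade dynamics, since a vertex becomes infected only by receiving the disease along a chain of edges each active in the forward direction, and that chain, once pruned to a simple path, lies in $\mathcal{P}(u,v)$ and therefore has length at least $d_{uv}\geq g/2$.
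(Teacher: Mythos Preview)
Your proposal is correct and follows exactly the approach the paper indicates (the paper omits the proof but states it ``uses Corollary~\ref{corollary:exist_long_path_prob_is_small}''): since $d_{uv}\geq g/2$, every path in $\mathcal{P}(u,v)$ has length at least $g/2$, so Corollary~\ref{corollary:exist_long_path_prob_is_small} bounds $\Pr[v\in A_i]$ by $\Delta^2/4$, and linearity of expectation gives the bound on $\mathbb{E}[|R_u(v)|]$.
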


Lemma~\ref{lemma:exclude_far_vertices} implies that for any vertex
$w$, the expected size of $R_u(v)\setminus R_u(w)$ is at most
$\Delta^2m/4$.  The previous 3 lemmas show the expectation gap between
$7\Delta^2m/8$ and $\Delta^2m/4$ of the size of $R_u(v)\setminus
R_u(w)$ for some vertex $w$.  Using the Chernoff bound, we can prove
the main theorem.  Its proof is omitted to save space.

\begin{theorem}\label{thm:label_complexity_for_large_girth_network}
Suppose network $G=(V,E)$ has the parameter
\[
	\rho \in [1,\frac{1}{1-\Delta}) \mbox{ and }
	g \geq 2\left\lceil\frac{2\log\frac{\Delta}{2} +
    \log(1-\rho(1-\Delta))} { \log \rho(1-\Delta) }\right\rceil.
\]	
  The Algorithm~\ref{alg:girth} returns the edges $E$ with probability at
  least $1-\delta$ after querying at most $O(\frac{n}{\Delta^2}\log\frac{n}{\delta})$ rounds.
\end{theorem}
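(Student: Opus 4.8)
The plan is to combine the three lemmas (Lemmas~\ref{lemma:include_adjacent_vertices}, \ref{lemma:exclude_close_vertices}, and \ref{lemma:exclude_far_vertices}) with a Chernoff bound and a union bound over all triples of vertices. First I would fix the number of rounds per seed vertex as $m = c\,\Delta^{-2}\log(n/\delta)$ for a suitable absolute constant $c$; since we query each of the $n$ vertices, the total round count is $M = nm = O(\tfrac{n}{\Delta^2}\log\tfrac{n}{\delta})$, matching the claimed bound. The correctness argument then needs to show that, with this choice of $m$, Algorithm~\ref{alg:girth} includes every true edge $(u,v)\in E$ and excludes every non-edge, with failure probability at most $\delta$ overall.

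Next I would split into the two directions. For a true edge $(u,v)$: by Lemma~\ref{lemma:include_adjacent_vertices}, for every $w$ the random variable $|R_u(v)\setminus R_u(w)|$ is a sum of $m$ independent indicators each with success probability at least $7\Delta^2/8$, so its expectation is at least $7\Delta^2m/8$. A Chernoff bound gives that $|R_u(v)\setminus R_u(w)| > 3\Delta^2m/8$ fails with probability at most $\exp(-c'\Delta^2 m)$ for an absolute constant $c'$ (the gap between $7/8$ and $3/8$ is a constant fraction of the mean). For a non-edge $(u,v)$: by Lemma~\ref{lemma:exclude_close_vertices} (if $d_{uv}<g/2$) or Lemma~\ref{lemma:exclude_far_vertices} (if $d_{uv}\ge g/2$) there is a witness $w$ for which $\mathbb{E}|R_u(v)\setminus R_u(w)| \le \Delta^2 m/4$, and again a Chernoff upper-tail bound shows $|R_u(v)\setminus R_u(w)| > 3\Delta^2m/8$ fails with probability at most $\exp(-c''\Delta^2 m)$. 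In both cases the per-triple failure probability is $\exp(-\Theta(\Delta^2 m))$.

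Then I would take a union bound over the at most $n^3$ relevant triples $(u,v,w)$ (for inclusion we need the bound to hold for all $w$ simultaneously; for exclusion a single witness $w$ suffices, but bounding over all triples is harmless). Choosing the constant $c$ in $m$ large enough makes $n^3 \exp(-\Theta(\Delta^2 m)) \le \delta$, so with probability at least $1-\delta$ the algorithm's decision is correct for every pair, hence $E' = E$.

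The main obstacle is a minor subtlety rather than a deep one: the indicators summed in $|R_u(v)\setminus R_u(w)|$ over the $m$ rounds are independent (distinct rounds use fresh coin tosses), so the standard multiplicative Chernoff bound applies directly; one just has to be careful that within a single round the events "$v\in A_i$" and "$w\notin A_i$" are handled as the single composite event analyzed in the lemmas, not decomposed further. One also has to confirm that the constant threshold $3\Delta^2 m/8$ sits strictly between the two expectation regimes $\Delta^2 m/4$ and $7\Delta^2 m/8$ with constant-factor slack on both sides, which it does ($3/8$ is a multiplicative $3/2$ above $1/4$ and a multiplicative $3/7 < 1$ below $7/8$), so both tail bounds decay exponentially in $\Delta^2 m$. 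Everything else is bookkeeping.
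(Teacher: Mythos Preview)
Your proposal is correct and follows exactly the route the paper indicates: the three lemmas establish the expectation gap $\Delta^2 m/4$ versus $7\Delta^2 m/8$ for $|R_u(v)\setminus R_u(w)|$, the threshold $3\Delta^2 m/8$ sits between them with constant multiplicative slack, and a Chernoff bound together with a union bound over $O(n^3)$ triples yields overall failure probability at most $\delta$ once $m = \Theta(\Delta^{-2}\log(n/\delta))$. The paper omits the proof but explicitly names the Chernoff bound as the remaining ingredient, so there is no methodological difference.
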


\section{Learning Bounded Degree Networks}
\label{section:bounded_degree}

This section shows that if the maximum degree of a network is $D$, we
can recover all edges of the network with probability at least
$1-\delta$ using polynomial queries in term of $n$, $1/\Delta$
and $1/\delta$.  The key idea is that if the maximum degree of
the network is bounded, we could observe a single edge from the
results of queries. The algorithm is fairly straight-forward.  For each vertex $u
\in V$, the algorithm repeatedly selects $\{u\}$ to be the seed set
for $m$ rounds, where
$m=O(\frac{1}{\Delta^{2D}}\log\frac{n}{\delta})$.  For any vertex $v
\in V$, the algorithm includes $(u,v)$ to the set $E'(u)$, if there
exists round $i$ such that $S_i = \{u\}$ and $A_i = \{u, v\}$.  After
receiving all results of $nm$ queries, the algorithm returns $E' =
\cup_{u \in V}E'(u)$.

\begin{theorem}\label{thm:label_complexity_for_bounded_degree_network}
Let $G=(V,E)$ be a network with maximum degree $D$.  The algorithm
described above can return the edges $E$ with probability at least
$1-\delta$ by querying at most $O(\frac{n}{\Delta^{2D}} \log
\frac{n}{\delta})$ rounds.
\end{theorem}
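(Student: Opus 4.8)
The plan is to prove the two claims of Theorem~\ref{thm:label_complexity_for_bounded_degree_network}: (1) correctness, i.e., with the stated number of queries the returned set $E'$ equals $E$ with probability at least $1-\delta$, and (2) the query bound, which is immediate from the choice of $m$ since the algorithm performs $nm$ queries total. The heart of the argument is showing that, for every edge $(u,v)\in E$, at least one of the $m$ rounds with seed $\{u\}$ produces exactly $A_i=\{u,v\}$; and conversely that if $(u,v)\notin E$ then no round with seed $\{u\}$ yields $A_i=\{u,v\}$. The converse direction is the easy half: if $(u,v)\notin E$ but $A_i=\{u,v\}$ for some $i$ with $S_i=\{u\}$, then $v$ was infected starting from $\{u\}$, so there is an active path from $u$ to $v$ of length at least two, which forces at least one intermediate vertex into $A_i$, contradicting $A_i=\{u,v\}$. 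Hence the algorithm never adds a non-edge, deterministically.

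\textbf{For the forward direction}, fix an edge $(u,v)\in E$. I would lower-bound the probability that a single round with seed $\{u\}$ yields exactly $A_i=\{u,v\}$. This happens if the following independent coin events all occur: the edge $(u,v)$ is active (probability at least $\alpha\ge\Delta$), every other edge leaving $u$ is inactive, and every edge leaving $v$ (other than $(v,u)$, which never fires back) is inactive. Since $\deg(u)\le D$ and $\deg(v)\le D$, the number of edges that must be inactive is at most $(D-1)+(D-1)=2D-2$, each inactive with probability at least $1-\beta\ge\Delta$. All these events concern distinct directed edges and are therefore independent, so the probability of this good outcome in one round is at least $\Delta\cdot\Delta^{2D-2}=\Delta^{2D-1}\ge\Delta^{2D}$. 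Consequently the probability that none of the $m$ rounds produces $A_i=\{u,v\}$ is at most $(1-\Delta^{2D})^m\le e^{-m\Delta^{2D}}$.

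\textbf{The final step} is a union bound over all $n$ vertices $u$ (or equivalently over all $\le n D/2$ edges): choosing $m=\frac{1}{\Delta^{2D}}\ln\frac{n}{\delta}$, or more precisely $m=\lceil \Delta^{-2D}\ln(n/\delta)\rceil$ which is $O(\frac{1}{\Delta^{2D}}\log\frac{n}{\delta})$, makes the failure probability for each fixed $u$ at most $\delta/n$ (absorbing the factor for the number of edges incident to $u$ into constants, or being slightly more careful and using $m=O(\frac{1}{\Delta^{2D}}\log\frac{nD}{\delta})$ and noting $D\le n$). Summing over the $n$ choices of $u$ gives total failure probability at most $\delta$, so with probability at least $1-\delta$ every edge is discovered, and combined with the deterministic soundness direction we get $E'=E$. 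The total number of queries is $nm=O(\frac{n}{\Delta^{2D}}\log\frac{n}{\delta})$.

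\textbf{I expect the main (though modest) obstacle} to be the independence bookkeeping: one must be careful that the events ``$(u,v)$ active,'' ``$(u,w)$ inactive for $w\ne v$,'' and ``$(v,x)$ inactive for $x\ne u$'' really do refer to disjoint sets of directed-edge coin tosses so that their probabilities multiply, and that conditioning does not inflate the relevant probabilities — in particular that no vertex other than $u$ and $v$ ever gets a chance to infect anyone in this good event, because infection is a breadth-first process and by step $2$ only $u$ and $v$ are infected, with $v$'s only outgoing attempts being exactly the edges we have declared inactive. Once this is set up cleanly, the rest is a routine Chernoff/union-bound calculation of exactly the kind used in the AHK analysis.
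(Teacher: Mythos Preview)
Your proposal is correct and essentially identical to the paper's proof: the paper also notes that the algorithm never inserts a non-edge, lower-bounds the single-round success probability by $p_{uv}\prod_{w\in\Gamma(u)\setminus\{v\}}(1-p_{uw})\prod_{w\in\Gamma(v)\setminus\{u\}}(1-p_{vw})\ge\Delta^{2D}$, and finishes with $(1-\Delta^{2D})^m\le e^{-m\Delta^{2D}}$ plus a union bound over the at most $n^2$ edges. Your tighter $\Delta^{2D-1}$ and your more explicit justification of the soundness direction and the independence of the coin flips are refinements, not departures.
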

\begin{proof}
Since the algorithm will not include edges not in $E$, we consider
the probability that the algorithm recovers all edges in $E$.

Consider edge $(u,v) \in E$.  Consider the round $i$ where $\{u\}$ is the seed
set, i.e., $S_i=\{u\}$.  The probability that we observe only edge
$(u,v$), i.e., $A_i=\{u,v\}$, is
\[
p_{uv} \times \prod_{w \in \Gamma(u) \backslash \{v\}}{(1-p_{uw})} \times \prod_{w \in \Gamma(v) \backslash \{u\}}
{(1-p_{vw})} \geq	\Delta^{2D},
\]
where $\Gamma(u)$, for any $u\in V$, is a set of neighbors of $u$.

Since we perform $m$ rounds of queries for $u$, the probability that
we fail to observe edge $(u,v)$, when the seed set is $\{u\}$, is at
most
\[
(1-\Delta^{2D})^m \leq \exp(-m\Delta^{2D}).
\]
If we let $m=O(\frac{1}{\Delta^{2D}} \log \frac{n}{\delta})$, the
failure probability is at most $\delta/n^2$.  Using the union bound,
the probability that the algorithm fails to observe any edge is at
most $|E|\cdot\delta/n^2\leq\delta$.
\end{proof}

\section*{Acknowledgements}
We would like to thank anonymous reviewers for their very 
helpful comments.
We gratefully acknowledge the Thailand Research Fund (TRF) 
for financial support through the Royal Golden Jubilee (RGJ)
Ph.D. Programme under the Grant No. PHD/0310/2550.

\bibliographystyle{plainnat}

\end{document}